\newcommand{\rev}[1]{#1}
\newtheorem{theorem}{Theorem}
\newtheorem{lemma}[theorem]{Lemma}
\def\eqref#1{equation~\ref{#1}}
\def\1{\bm{1}}
\DeclareMathAlphabet{\mathsfit}{\encodingdefault}{\sfdefault}{m}{sl}
\SetMathAlphabet{\mathsfit}{bold}{\encodingdefault}{\sfdefault}{bx}{n}
\newcommand\m[1]{\ensuremath{\mathcal{#1}}}
\newcommand{\feature}{\bm{\phi}}
\newcommand{\cost}{\bm{q}}
\newcommand{\constr}{\bm{\rho}}
\newcommand{\predconstr}{\bm{\hat{\rho}}}
\newcommand{\decisionvar}{\bm{x}}
\newcommand{\solution}{\decisionvar^\star}
\newcommand{\negsolution}{\decisionvar_{\mathit{neg}}}
\newcommand{\ML}{\m{M}_{\theta}}
\newcommand{\feas}{\mathit{Feas}}
\newcommand{\infeas}{\mathit{Infeas}}
\newcommand{\feasspace}{\m{F}}
\newcommand{\ConSet}{\m{S}}
\DeclarePairedDelimiterX{\infdivx}[2]{(}{)}{%
  #1\;\delimsize\|\;#2%
}
\newcommand{\relu}{\m{{\mathit{Relu}}}}
\newcommand{\Softplus}{ \mathit{softplus}  }
\newcommand{\IPL}{\m{L}^{\mathit{IPL}}}
\newcommand{\IAL}{\m{L}^{\mathit{IAL}}}
\newcommand{\FPL}{\m{L}^{\mathit{OPL}}}
\newcommand{\unsat}{ \mathit{UNSAT} }
\newcommand{\sat}{ \mathit{SAT} }
\title{Feasibility-Aware Decision-Focused Learning for Predicting Parameters in the Constraints}
\author{%
Jayanta Mandi \\
  Department of Computer Science \\
  KU Leuven, Leuven, Belgium\\
  \texttt{jayanta.mandi@kuleuven.be} 
\And 
 Marianne Defresne\thanks{Affiliated to KU Leuven during submission of this article} \\
 LAAS-CNRS, Université de Toulouse\\ CNRS, INSA, Toulouse \\
  \texttt{marianne.defresne@laas.fr} 
  \And
  Senne Berden \\
  Department of Computer Science \\
  KU Leuven, Leuven, Belgium\\
  \texttt{senne.berden@kuleuven.be} \\
  \And
  Tias Guns \\
    Department of Computer Science \\
  KU Leuven, Leuven, Belgium\\
  \texttt{tias.guns@kuleuven.be} \\
}
\begin{document}

\maketitle

\begin{abstract}
When some parameters of a constrained optimization problem (COP) are uncertain, this gives rise to a predict-then-optimize (PtO) problem, comprising two stages: the \textit{prediction} of the unknown parameters from contextual information and the subsequent \textit{optimization} using those predicted parameters. Decision-focused learning (DFL) implements the first stage by training a machine learning (ML) model to optimize the quality of the decisions made using the predicted parameters. When the predicted parameters occur in the constraints, they can lead to infeasible solutions. Therefore, it is important to simultaneously manage both feasibility and decision quality. We develop a DFL framework for predicting constraint parameters in a generic COP. While prior works typically assume that the underlying optimization problem is a linear program (LP) or integer LP (ILP), our approach makes no such assumption. We derive two novel loss functions based on maximum likelihood estimation (MLE): the first one penalizes infeasibility (by penalizing predicted parameters that lead to infeasible solutions), while the second one penalizes suboptimal decisions (by penalizing predicted parameters that make the true optimal solution infeasible). We introduce a single tunable parameter to form a weighted average of the two losses, allowing decision-makers to balance suboptimality and feasibility. We experimentally demonstrate that adjusting this parameter provides decision-makers control over this trade-off. Moreover, across several COP instances, we show that adjusting the tunable parameter allows a decision-maker to prioritize either suboptimality or feasibility, outperforming the performance of existing baselines in either objective.
\end{abstract}

\section{Introduction}
Many real-world optimization problems involve parameters that are unknown at decision time, such as uncertain customer demand in manufacturing problems or unknown traffic in delivery routing. This gives rise to \textit{predict-then-optimize} (PtO) problems~\citep{mandi2023decision}, where a machine learning (ML) model first makes point predictions of the unknown parameters from contextual features, after which the resulting instantiated optimization problem is solved.

In \textit{prediction-focused learning} (PFL), the predictive model is trained solely to maximize the accuracy of parameter predictions by minimizing standard ML losses. By doing so, PFL does not consider the effect of prediction errors on the solution to the optimization problem.
For example, consider a vehicle routing problem, where one first needs to predict unknown customer demands, to then make routing decisions. In this setting, overestimations and underestimations can have differing effects. Overestimating demand may lead to a conservative suboptimal decision, while underestimating demand may lead to an infeasible decision. However, these differing effects are not accounted for by PFL losses.

To address this shortcoming of PFL, \textit{decision-focused learning} (DFL) directly trains the ML model to minimize a \emph{task-specific decision loss} that reflects the quality of the solution produced by using the predicted parameters. 
With a slight abuse of notation, we will use the term `predicted solution' to denote the solution produced by using the predicted parameters, although the ML model does not \emph{directly} predict the solution, it predicts parameters that are passed to an optimization solver at inference time.

Most existing DFL works \citep{elmachtoub2022smart,Mipaal,mulamba2020discrete, aaai/WilderDT19} focus on cases where the predicted parameters appear \emph{only} in the objective function. 
In such settings, the decision loss is typically the \textit{regret}, which measures suboptimality of the predicted solution. 
Instead, we focus on predicting parameters that appear in the constraints of the optimization problem.
In this setting too, suboptimal solutions may arise, because the feasible space defined by the predicted parameters may exclude the true optimal solution. However, focusing only on suboptimality is insufficient. This is because the predicted solution may become infeasible with respect to the constraints instantiated by the \textit{true} parameters. 
Therefore, the ML model should also be trained to minimize the likelihood of such infeasible outcomes.

Focusing only on suboptimality or on infeasibility generally does not lead to a desirable balance between these two objectives. To address this, \citet{hu2023predict+} model decision-making with uncertain constraint parameters as a two-stage process: first, a solution is generated using the predicted parameters; then, if needed, corrective actions -- incurring additional costs -- are carried out to resolve potential infeasibility.
Subsequent works~\citep{hu2023twostage,Hu2023BranchL} have extended this line of research.

One limitation of their approach is that the corrective action is often problem-specific, and restricted to a specific form of optimization problem (e.g., a linear program). Instead, we avoid the complexity of a second stage altogether, and focus on predicting parameters that minimize both the infeasibility and the suboptimality of the first-stage solution itself. To this end, 
we first propose two novel loss functions: the Infeasibility Penalty Loss (IPL) and Optimality-Preserving Loss (OPL).
The idea behind IPL is that when solving with the predicted parameters leads to a solution that is infeasible according to the true parameters, such infeasibility should be (proportionally) penalized. On the other hand, the OPL loss will penalize predicted parameters that make the true optimal solution infeasible, to avoid obtaining a suboptimal solution when solving with the predicted parameters.
 

These two losses often contradict each other.
The IPL term encourages conservative parameter predictions to guarantee feasibility, even at the expense of producing extremely suboptimal solutions. In contrast, OPL incentivizes loose parameter predictions to ensure the optimal solution is not cut away.
Moreover, in practice, a decision-maker may favor more feasibility over optimality or vice versa. To support this subjective preference of one loss over the other, we propose to minimize a weighted average of the two losses.
In this way, we introduce a tunable weighting mechanism that adjusts the total loss based on the decision-maker’s preference. We refer to our approach as \textit{Odece}, for ``optimizing decisions through end-to-end constraint estimation''.

We compare Odece with the PFL approach of minimizing the mean squared error (MSE), and with some existing DFL approaches. These approaches do not allow the decision-maker to adjust the predicted solution based on their subjective preference between optimality and feasibility.
Moreover, the DFL approaches for constraint prediction are limited to linear programs (LPs) or integer or mixed-integer LPs (ILPs and MILPs), or they focus on predicting full constraint systems rather than individual parameters within the constraints. Our experimental results across various optimization problem instances show that, for a single value of the tunable parameter, Odece matches or outperforms existing baselines in terms of both suboptimality and feasibility. More importantly, they show that our approach offers a simple way for decision-makers to \textit{control} the trade-off between suboptimality and infeasibility using a single parameter.

\section{Related Literature}
\paragraph{Differentiating through optimization problems.}
Differentiating through optimization problems has gained attention in ML, as it allows solvers to be embedded into gradient-based training. This is often done by ADMM \citep{ButlerK23,Sun0WTPT23} or by implicit differentiation \citep{BlondelImplicit} of the optimality conditions. Implicit differentiation techniques have been developed for quadratic programs~\citep{amos2017optnet} and conic optimization~\citep{agrawal2019differentiable}.
However, for combinatorial optimization problems like ILPs, the true gradient is zero almost everywhere because small changes in the parameters usually do not affect the optimal solution. Hence, implicit differentiation techniques, which are applicable to smooth optimization problems, cannot be applied to such combinatorial problems. To address this, \citet{CombOptNet} propose CombOptNet, a technique for ILPs to compute pseudo-gradients based on the Euclidean distance between the solution and the constraints.

\paragraph{Predicting objective parameters.}
Most DFL works focus on predicting parameters in the objective function. 
Some of these works are based on differentiating through the problem after smoothing it, either using regularization \citep{Mipaal, MandiNEURIPS2020, aaai/WilderDT19} or perturbation \citep{berthet2020learning, dalle2022, niepert2021implicit, PogancicPMMR20}. Another approach is to define informative surrogates of the decision loss~\citep{berden2025solver,elmachtoub2022smart, mandi2020smart, ltr, mulamba2020discrete, cave24}.
Regularization-based methods require differentiable solvers, such as \textit{Cvxpylayers}~\citep{agrawal2019differentiable} or \textit{OptNet}~\citep{amos2017optnet}. In contrast, perturbation and surrogate-based approaches only need access to solutions. These solutions generally do not need to come from an exact solver and can be obtained by heuristics or alternative methods.
For instance, surrogate-based techniques have been applied to planning problems solved via approximate automated planners \citep{dflplanningecai24}.
We refer readers to a recent survey paper \citep{mandi2023decision} for a more comprehensive overview.

\paragraph{Predicting constraint parameters.} 
\rev{Relatively few DFL papers have considered the prediction of parameters in the constraints. Most of these~\citep{hu2023predict+, hu2023twostage, Hu2023BranchL} model decision-making as a two-stage process. In the first stage, a solution is obtained using the predicted parameters, which may be infeasible with respect to the true parameters. The second stage then applies a corrective action to convert the first-stage solution into a feasible one.
Among these works, \citet{hu2023twostage} also recommend a strategy for applying their approach to settings where no corrective action is available at inference time. Their proposal is to assign a very high penalty factor to the corrective action during training.
In contrast to this line of work, we do not introduce corrective actions even during training and focus on balancing decision quality and feasibility in a single stage.
Another approach \citep{defresne2023epll} simultaneously learns the objective and the constraints, but the constraints are soft and the problem must be formulated as a discrete graphical model.
The work by \citet{SolverFreeNEURIPS2022} aims to learn the full set of constraints from the features. They do this by learning a set of hyperplanes that separate the optimal solution from other \emph{negative assignments}. A key component of their technique is the generation of such negative assignments, which must include both infeasible as well as feasible but suboptimal ones.
In contrast to their setting, we focus on learning only the parameters within a known constraint structure. Their method does not easily generalize to our setting, in which the learned parameters must remain compatible with the known constraints to make high-quality solutions.}
Moreover, both \citet{hu2023twostage} and \citet{SolverFreeNEURIPS2022} restrict their methods to LPs and ILPs, while the framework proposed in this paper is more general and applicable to a broader class of optimization problems. 
The prediction of constraints is also central in constraint acquisition, for optimization \citep{berden2022learning, kumar2023learningMAXSAT} or constraint satisfaction \citep{ijcai2023p208,tsouros2024learning} problems, but the setting differs because the structure of the constraints is unknown and the uncertainty does not depend on any input features.

\section{Preliminaries}
\label{sect:prelim}
\paragraph{Problem setup.}
We will consider a constrained optimization problem (COP) of the following form:
\begin{subequations}
\label{eq:FPL_generic}
\begin{align}
     \min_{ \decisionvar \in \Omega } \quad &
    f( \decisionvar ; \cost  ) \label{eq:FPL_generic_obj}\\
        \label{eq:FPL_generic_ineq}
    \text{subject to} \quad &
    g_i( \decisionvar ; \constr  ) \leq 0, \;\; \quad i \in \{1, \dots, M \}
\end{align}
\end{subequations}
\noindent where $\Omega$ defines the domain of the decision variable vector $\decisionvar$. $\Omega$ may be a subset of real numbers or integers, depending on the type of optimization problem. The upper and lower bounds for each decision variable are also specified by $\Omega$.
 We will assume that the constraint functions $g_i( \decisionvar ; \constr  )$ 
 are differentiable functions.
We do not explicitly express equality constraints, since any constraint of the form $h_j( \decisionvar ; \constr  ) = 0$ can be represented by two inequality constraints:
$h_j( \decisionvar ; \constr  ) \leq 0$
and $-h_j( \decisionvar ; \constr  ) \leq 0$.
To make the notation simpler,
we denote the set of constraints by $\ConSet$ and the feasible set when the constraint parameters take the value $\rho$ by $\feasspace(\rho)$.
More formally,
\begin{equation}
    \feasspace (\constr) = \{ \decisionvar \in \Omega :  g_i( \decisionvar ; \constr  ) \leq 0, \quad \forall i \in \ConSet  \}
\end{equation}
We use $\solution(\cost, \constr)$ to denote a (possibly non-unique) optimal solution to the optimization problem for a given objective parameter $\cost$ and constraint parameter $\constr$.
When multiple optimal solutions exist, we define the set of all optimal solutions as
$\bm{W} (\cost, \constr) =  \{ \decisionvar: f(\decisionvar; \cost)  \leq f(\decisionvar^\prime; \cost) ; \; \forall \decisionvar^\prime \in \feasspace (\constr) \}  $.  Note that $\bm{W}(\cost, \constr)$ is a subset of the feasible region $\feasspace(\constr)$. So, if $\feasspace$ is empty (i.e., no feasible solution exists), $\bm{W}$ is also empty.
In the special case of a unique optimal solution, $\bm{W}(\cost, \constr)$ is a singleton.
However, our implementation relies on a solver that returns a single solution. When multiple optimal solutions exist, the commercial solvers we use break ties using a predefined rule.
\paragraph{Example.}
Consider the multi-dimensional 0-1 knapsack problem (MDKP) \citep{hardknapsack}. 
The objective of this problem is to choose a subset with maximal value from a given set of items, subject to $M$ capacity constraints. Let $\rho_i$ be the capacity in dimension $i$. There are
$\mathit{N}$ items, the value of each item is $q_n$, and $\rho_{(n,i)}$ is the weight of item $n$ in dimension $i$.
Each decision variable $x_n$ can be either zero or one, hence $\Omega = \{ 0, 1 \}^N$.
This optimization problem can be modeled as an ILP as follows:
\begin{equation}
\label{eq:knapsackformulation}
    \min_{x_{1:N}} \sum_{n=1}^N (-q_n) x_n \quad \text{such that} \quad x_n \in \{0,1\}\;\;\forall n \in [N], \quad \sum_{n=1}^N \rho_{ni} x_n \leq \rho_i \;\; \forall i \in [M]
\end{equation}
\noindent Here, $\ConSet$ consists of the $M$ capacity constraints.

\paragraph{Predict-then-Optimize problems.}
In the PtO formulation for uncertainty in the constraints, some or all components of the parameter vector $\constr$ are unknown and must be estimated before solving the COP. For example, in the knapsack problem, the item weights $\rho_{(n, i)}$ may be unknown, but the capacity in each dimension $\rho_i$ might be known.
Additionally, we observe a set of values correlated with $\constr$, which we refer to as \emph{features} and denote by $\phi$. These features are used to make estimates of the unknown parameters.
For the estimation task, a predictive model $\ML$, parameterized by $\theta$, is trained using training data of the form $\m{D} \equiv \{ (\feature_\kappa, \constr_\kappa, \cost_\kappa, \solution (\cost_\kappa, \constr_\kappa)) \}_{\kappa=1}^K$. Formally, the goal is to learn a mapping from the feature space to the parameter space. In this work, we assume that the cost vector $\cost_\kappa$ is instance-specific but known and not used to learn the mapping. Note that the structure of the COP is fixed, i.e., the number of constraints and the domains of decision variables are known.
Once trained, given a new feature vector $\feature$ and a known cost vector $\cost$, the model produces an estimated parameter  vector $\predconstr = \ML(\feature)$ using $\feature$.
Since this is a PtO task, $\ML$ is ultimately evaluated based on the quality of the solution $\solution(\cost, \predconstr)$ induced by the predicted parameters.

\paragraph{A maximum likelihood perspective on DFL.}
We will develop a framework based on maximum likelihood estimation (MLE) to predict constraint parameters. Our goal is to maximize the log-likelihood of the observed data $\m{D}$. 
In PFL, the parameters $\theta$ of the predictive model would be estimated to increase the likelihood of the observations $\constr_\kappa$ given the features $\feature_\kappa$. However, in DFL training, the focus is 
on the likelihood of $\solution(\cost_\kappa, \constr_\kappa)$ being the optimal solution, given $\cost_\kappa$ and $\feature_\kappa$.
To put it another way, the MLE perspective of DFL consists of learning a $\theta$ to predict the constraint parameter vector $\predconstr_\kappa$ so that $\solution(\cost_\kappa, \constr_\kappa)$ is the most likely solution to the parameter $\predconstr_\kappa = \ML (\feature_\kappa)$. 

\section{Methodology}
\label{sect:method}
We start by defining a function $\unsat_i(\decisionvar, \constr_\kappa)$ to  indicate whether the $i$-th constraint is violated by an assignment $\decisionvar$ when the constraint parameter is set to the vector $\constr_\kappa$. It is defined as:
$$\unsat_i(\decisionvar, \constr_\kappa) :=  
\mathds{1}\{ g_i(\decisionvar, \constr_\kappa) > 0 \}$$
Similarly, we define $\sat_i(\decisionvar, \constr_\kappa)$ to indicate that the $i$-th constraint is satisfied. Therefore, $\sat_i(\decisionvar, \constr_\kappa)$ := $1 - \unsat_i(\decisionvar, \constr_\kappa)$.
Using these definitions, we introduce $\feas(\decisionvar, \constr_\kappa)$, an indicator variable equal to 1 if the solution $\decisionvar$ satisfies all the constraints.
\begin{equation}
    \feas(\decisionvar, \constr_\kappa) = \prod_{ i \in \ConSet} \sat_i(\decisionvar, \constr_\kappa)
\end{equation}
We use $\infeas(\decisionvar, \constr_\kappa)$ to denote that $\decisionvar$ is infeasible with respect to $ \constr_\kappa$.
Note that $\decisionvar$ is an infeasible assignment if it violates any one of the constraints.
Hence, we can write:
\begin{equation}
    \infeas(\decisionvar, \constr_\kappa) = \max_{i \in \ConSet}  \unsat_i \big(\decisionvar, \constr_\kappa \big) 
\end{equation}
As we want to maximize the likelihood of obtaining a feasible solution with optimal objective value, we first model the probability of violating the constraints using the predicted parameters.
Given predicted parameters $\ML (\feature_\kappa)$, we model the parametric probability $\mathit{P}_{\theta} \big(\unsat_i (\decisionvar, \ML (\feature_\kappa) ) \big| \ML (\feature_\kappa) \big)$ in the following manner:
\begin{equation}
\label{eq:unsatprob}
    \mathit{P}_{\theta} \Big(\unsat_i \big(\decisionvar, \ML (\feature_\kappa) \big) \Big| \ML (\feature_\kappa) \Big) = \frac{1}{1 + \exp(-g_i (\decisionvar, 
    \ML (\feature_\kappa) ) ) }
\end{equation}
Note that the above is a smooth version of the following non-smooth distribution:
\begin{equation}
 \mathit{P}_{\theta}  =  \begin{cases}
1, & \text{if } g_i (\decisionvar, 
    \ML (\feature_\kappa) ) \geq 0 \\
0, & \text{otherwise}
\end{cases}
\end{equation}
As $\mathit{P}_{\theta} \left(\sat_i \left(\decisionvar, \ML (\feature_\kappa) \right) \middle| \ML (\feature_\kappa) \right)$ = $1-$ $ \mathit{P}_{\theta} \left(\unsat_i \left(\decisionvar, \ML (\feature_\kappa) \right) \middle| \ML (\feature_\kappa) \right)$,
we can write:
\begin{equation}
\label{eq:satprob}
    \mathit{P}_{\theta} \Big(\sat_i \left(\decisionvar, \ML (\feature_\kappa) \right) \Big| \ML (\feature_\kappa) \Big) = \frac{1}{1 + \exp(g_i (\decisionvar, 
    \ML (\feature_\kappa) ) ) }
\end{equation}

\subsection{Loss Function to Penalize Infeasibility-Inducing Predictions}

We first propose a loss function that ensures that true infeasible assignments remain infeasible under the predicted parameters.
Let $\negsolution$ be an assignment that is infeasible under the true parameters.
As it is easy to determine which constraints are violated by $\negsolution$, we develop an MLE framework to maximize the likelihood of violating those constraints under the predicted parameters.
Given a $\negsolution$, the log-likelihood of violating these constraints can be written in the following form:
\begin{align}
   \sum_{i \in \ConSet} & \unsat_i \left(\negsolution, \constr_\kappa \right)  \log \mathit{P}_{\theta} \Big(\unsat_i \left(\negsolution, \ML (\feature_\kappa) \right) \Big| \ML (\feature_\kappa) \Big) 
\end{align}
Note that $\unsat_i \left(\negsolution, \constr_\kappa \right)$ can be viewed as a mask that equals $1$ when $\negsolution$ violates constraint $i$ under the true parameters $\constr_\kappa$. 
Using the definition of $\mathit{P}_{\theta} \left(\unsat_i \left(\decisionvar, \ML (\feature_\kappa) \right) \middle| \ML (\feature_\kappa) \right)$ from Eq.~\ref{eq:unsatprob}, we can write the following:
\begin{equation}
\log \mathit{P}_{\theta} \Big(\unsat_i \left(\negsolution, \ML (\feature_\kappa) \right) \Big|  \ML (\feature_\kappa) \Big) 
= - \log \left( 1 + \exp \left(-g_i 
\left(\negsolution, 
    \ML \left(\feature_\kappa \right) \right) \right) \right)
\end{equation}
We now propose a loss function to maximize this likelihood (i.e., minimize the negative log-likelihood). 
We include a margin parameter $\Upsilon >0$ to encourage the constraints to be violated with a buffer.
\rev{Since this loss penalizes cases where an infeasible assignment $\negsolution$ is considered feasible under the predicted parameters, we refer to it as the Infeasibility-Aware Loss (IAL).}
\begin{align}
\label{eq:ipl2}
    \IAL ( \negsolution,  \ML (\feature_\kappa)) = \sum_{i \in \ConSet}
    & \unsat_i \left(\negsolution, \constr_\kappa \right) \Softplus \big( \Upsilon - g_i (\negsolution, 
    \ML (\feature_\kappa) ) \big) 
\end{align}
\noindent where we write $\log  (1 + \exp( . )) $ as $\Softplus (.)$.

\paragraph{Intuition behind IAL.} Note that $\Softplus$ is a smooth approximation to the $\relu$ function.
If we replace the $\Softplus$ function with a $\relu$ function, Eq.~\ref{eq:ipl2} takes the following form:
\begin{align}
\label{eq:ipl_absolute}
  \overline{  \IAL}( \negsolution,  \ML (\feature_\kappa)) = \sum_{i \in \ConSet}
    & \unsat_i (\negsolution, \constr_\kappa ) \max \big( 0, \Upsilon - g_i (\negsolution, 
    \ML (\feature_\kappa) ) \big) 
\end{align}
From this expression, we see that for a truly violated constraint, i.e., of which $\unsat_i \big(\negsolution, \constr_\kappa \big) = 1$, the loss becomes zero only if $g_i \big(\negsolution, \ML (\feature_\kappa) \big)> \Upsilon$. In other words, the predicted constraint must also be violated. 
$\overline{  \IAL}( \negsolution,  \ML (\feature_\kappa))$ is zero if this holds for all constraints violated by $\negsolution$.
\paragraph{Selecting $\negsolution$.} The IAL loss formulation is based on $\negsolution$, a candidate assignment which violates at least one constraint with respect to the true parameter $\constr_\kappa$. This leads to the question of how to obtain such a $\negsolution$. We adopt the following approach:
 for each instance, after predicting $\predconstr_\kappa$, we solve the COP using $\predconstr_\kappa$ and obtain a solution $\solution(\cost_\kappa , \predconstr_\kappa)$. If this solution is infeasible under the true parameters $\constr_\kappa$, we use it as $\negsolution$. Otherwise, we set the loss to zero, since the predicted solution satisfies all the true constraints, which is the preferred outcome.
 We call this final loss function the Infeasibility Penalty Loss (IPL), as it penalizes the predicted parameters when the resulting predicted solution violates the true constraints.
 \begin{equation}
 \label{eq:IAL}
  \IPL (\constr_\kappa, \predconstr_\kappa ) =      \left(1 - \feas\left( \solution \left(\cost_\kappa,\predconstr_\kappa \right), \constr_\kappa \right) \right)
      \IAL (\solution (\cost_\kappa,\predconstr_\kappa), \predconstr_\kappa ) 
 \end{equation}


We can also define $\overline{  \IPL}$ where $\IAL$ in Eq.~\ref{eq:IAL} is replaced with $\overline{  \IAL}$.
\begin{lemma}
\label{lemma1}
    For a true parameter $\constr_\kappa$ and a predicted parameter $\predconstr_\kappa$, $\overline{  \IPL} (\constr_\kappa, \predconstr_\kappa ) $ is zero if and only if $\solution \left(\cost_\kappa,\predconstr_\kappa \right)$, an optimal solution for $\predconstr_\kappa$, is feasible with respect to $\constr_\kappa$.
\end{lemma}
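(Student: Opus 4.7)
The plan is to prove the biconditional by unpacking the two factors in $\overline{\IPL}(\constr_\kappa, \predconstr_\kappa) = \bigl(1 - \feas(\solution(\cost_\kappa, \predconstr_\kappa), \constr_\kappa)\bigr)\,\overline{\IAL}(\solution(\cost_\kappa, \predconstr_\kappa), \predconstr_\kappa)$, and observing that both factors are non-negative (the first by definition of $\feas \in \{0,1\}$, the second because each summand is a $\max(0, \cdot)$ times a $\{0,1\}$-indicator). Hence $\overline{\IPL}=0$ iff at least one of the two factors vanishes, and I will show these two cases collapse to the single condition in the lemma.

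For the easy direction ($\Leftarrow$), if $\solution(\cost_\kappa, \predconstr_\kappa)$ is feasible with respect to $\constr_\kappa$, then by definition $\feas(\solution(\cost_\kappa,\predconstr_\kappa), \constr_\kappa)=1$, so the first factor is zero and the product vanishes immediately.

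For the converse ($\Rightarrow$), I would argue by contradiction: suppose $\overline{\IPL}=0$ but $\solution(\cost_\kappa, \predconstr_\kappa)$ is infeasible with respect to $\constr_\kappa$. Then the first factor equals $1$, so $\overline{\IAL}(\solution(\cost_\kappa,\predconstr_\kappa), \predconstr_\kappa)$ must be zero. Here is where I use the key structural fact: since $\solution(\cost_\kappa, \predconstr_\kappa)$ is an optimal solution of the COP with parameters $\predconstr_\kappa$, it lies in $\feasspace(\predconstr_\kappa)$, so $g_i(\solution(\cost_\kappa,\predconstr_\kappa), \predconstr_\kappa) \leq 0$ for every $i \in \ConSet$. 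Consequently $\Upsilon - g_i(\solution(\cost_\kappa,\predconstr_\kappa), \predconstr_\kappa) \geq \Upsilon > 0$ for every $i$, and thus $\max\bigl(0, \Upsilon - g_i(\solution(\cost_\kappa,\predconstr_\kappa), \predconstr_\kappa)\bigr) > 0$ for every $i$. For the sum in $\overline{\IAL}$ to vanish, every mask $\unsat_i(\solution(\cost_\kappa,\predconstr_\kappa), \constr_\kappa)$ must therefore equal zero, i.e.\ $\solution(\cost_\kappa, \predconstr_\kappa)$ satisfies every true constraint, contradicting the infeasibility assumption.

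There is no real obstacle here: the lemma is essentially a definitional consequence, with the only substantive ingredient being that the margin $\Upsilon$ is strictly positive, which guarantees $\Upsilon - g_i \geq \Upsilon > 0$ whenever $g_i \leq 0$. One minor care point is that the statement of the lemma implicitly assumes $\feasspace(\predconstr_\kappa) \neq \emptyset$ so that $\solution(\cost_\kappa, \predconstr_\kappa)$ exists; under the paper's standing convention that the predicted solution is obtained from a solver, this is exactly the case in which the loss is defined, and I would note this briefly at the start of the proof.
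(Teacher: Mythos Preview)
Your proposal is correct and follows essentially the same approach as the paper's proof: both directions rely on the factorization of $\overline{\IPL}$, with the converse argued by contradiction using the fact that $\solution(\cost_\kappa,\predconstr_\kappa)\in\feasspace(\predconstr_\kappa)$ forces $g_i(\solution(\cost_\kappa,\predconstr_\kappa),\predconstr_\kappa)\le 0$ and hence $\Upsilon-g_i\ge\Upsilon>0$. Your phrasing (concluding that all masks $\unsat_i$ must vanish) is slightly cleaner than the paper's, and your remark about the implicit nonemptiness of $\feasspace(\predconstr_\kappa)$ is a nice addition, but the underlying argument is the same.
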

\begin{proof}
     $\overline{  \IPL}$ is by construction zero when $\solution \left(\cost_\kappa,\predconstr_\kappa \right)$ is feasible with respect to $\constr_\kappa$. Because if $\solution \left(\cost_\kappa,\predconstr_\kappa \right)$ is feasible with respect to $\constr_\kappa$, $ \feas\left( \solution \left(\cost_\kappa,\predconstr_\kappa \right), \constr_\kappa \right)$ is equal to one, which makes $\overline{  \IPL} =0$. Hence,
     $$    \feas\left( \solution \left(\cost_\kappa,\predconstr_\kappa \right), \constr_\kappa \right) =1 \implies \overline{  \IPL} =0$$

     To prove the other direction, let us assume,  $\overline{  \IPL} =0$, but $ \feas\left( \solution \left(\cost_\kappa,\predconstr_\kappa \right), \constr_\kappa \right)$ $\neq1$.
     Hence, $\overline{  \IAL} (\solution (\cost_\kappa,\predconstr_\kappa), \predconstr_\kappa ) $ must be zero.
     Note that, $\overline{  \IAL} (\solution (\cost_\kappa,\predconstr_\kappa), \predconstr_\kappa )$ can be zero only if 
     $g_i ( \solution (\cost_\kappa,\predconstr_\kappa),\predconstr_\kappa ) \geq \Upsilon >0$ for all the violated constraints.
However, as $\solution (\cost_\kappa,\predconstr_\kappa),
    \predconstr_\kappa )$ is the optimal solution with $\predconstr_\kappa$ being the parameter, $g_i ( \solution (\cost_\kappa,\predconstr_\kappa),
    \predconstr_\kappa ) $ must be less than zero for all the constraints and we arrived at a contradiction.
    So, $$\overline{  \IPL} =0 \implies   \feas\left( \solution \left(\cost_\kappa,\predconstr_\kappa \right), \constr_\kappa \right) =1 $$
\end{proof}
\subsection{Loss Function for Preserving Optimal Solutions}
The IPL loss penalizes predicted parameters when the predicted solution becomes infeasible with respect to the true parameters. However, minimizing \emph{only} $\IPL$ can lead to learning a feasible region that is much stricter than the true feasible region, as any solution therein will be feasible under the true parameters. For example, when learning the capacity parameter in a knapsack problem, minimizing only $\IPL$ might result in capacity values so small that no items are selected. While such solutions are always feasible with respect to the true parameters, they are suboptimal and of no practical value. 
Therefore, we also need a loss function to penalize cases where the learned parameters turn out to be too strict. To do this, we define a loss function that penalizes predicted parameters when they render known optimal solutions infeasible. Within the MLE framework, this corresponds to maximizing the likelihood that the true solution is feasible.

Thus, we formulate the probability of an assignment $\decisionvar$ being a feasible point, given the parameter $\ML(\feature_\kappa)$. Note that $\decisionvar$ is a feasible assignment only if it satisfies all the constraints. 
While the probabilities of satisfying constraints $i$ and $j$ are \emph{not} independent, they are conditionally independent \citep[Chapter~8]{bishop2006pattern} given the constraint parameter vector $\ML(\feature_\kappa)$. This is because the constraint parameter vector determines the feasibility of each constraint independently. 
Once $\ML(\feature_\kappa)$ is known, the probability of satisfying constraint $i$ depends only on $\ML(\feature_\kappa)$, and any additional information about constraint $j$ being satisfied does not affect that probability.
Because of the conditional independence, we can express the conditional probability of the assignment, $\decisionvar$ being feasible given $\ML(\feature_\kappa)$, by:
\begin{equation}
     \mathit{P}_{\theta} \Big( \feas\left(\decisionvar, \ML (\feature_\kappa)  \right) \Big| \ML (\feature_\kappa) \Big) = \prod_{ i \in \ConSet} \mathit{P}_{\theta} \left(\sat_i \left(\decisionvar, \ML (\feature_\kappa) \right) \middle|  \ML (\feature_\kappa) \right)
\end{equation}
Now using Eq.~\ref{eq:satprob}, we can model $\mathit{P}_{\theta} \left( \feas\left(\decisionvar, \ML (\feature_\kappa) \middle| \ML (\feature_\kappa) \right) \right)$, as
\begin{equation}
    \mathit{P}_{\theta} \Big( \feas\left(\decisionvar, \ML (\feature_\kappa)  \right) \Big| \ML (\feature_\kappa) \Big) = \prod_{ i \in \ConSet} \frac{1}{1 + \exp(g_i (\decisionvar, 
    \ML (\feature_\kappa) ) ) }
\end{equation}
Thus, we obtain the log-likelihood function of $\decisionvar$ being feasible, in the form of:
\begin{equation}
   - \sum_{ i \in \ConSet} \log \left( 1 + \exp \left(g_i 
   \left(\decisionvar, 
    \ML (\feature_\kappa) \right) \right)   \right) =  - \sum_{ i \in \ConSet} \Softplus \left(g_i (\solution_\kappa, 
    \ML (\feature_\kappa) \right) 
\end{equation}
We point out that in the training data, we know that $\solution (\cost_\kappa, \constr_\kappa)$ is a feasible solution. Denoting $\solution_\kappa$ as the known feasible point, we can formulate a loss function by minimizing the negative log-likelihood.
In our implementation, we add a margin parameter $\Upsilon$ as we have done for $\IPL$.
Since this loss is designed to maximize the probability that the optimal solution remains feasible, we refer to it as the Optimality-Preserving Loss (OPL). We express it in the following form:
\begin{equation}
   \label{eq:FPL_withmargin}
   \FPL (\constr_\kappa,   \ML (\feature_\kappa) ) = \sum_{ i \in \ConSet} \Softplus \big( \Upsilon + g_i \left(\solution \left(\cost_\kappa,\constr_\kappa \right), 
    \ML (\feature_\kappa)  \right)  \big)
\end{equation}

\paragraph{Intuition behind OPL.}
By replacing the $\Softplus$ in Eq.~\ref{eq:FPL_withmargin} with a $\relu$ function, we can write 
\begin{equation}
   \label{eq:FPL_absolute}
  \overline{ \FPL }(\constr_\kappa,   \predconstr_\kappa ) = \sum_{ i \in \ConSet}  \max\big( \big(  \Upsilon + g_i \left( \solution \left(\cost_\kappa,\constr_\kappa \right), 
    \predconstr_\kappa \right) \big) , 0 \big)
\end{equation}
\noindent where we write $\ML (\feature_\kappa)$ as $\predconstr_\kappa$ for brevity.
From this expression, it is easy to see that 
$\FPL$ attains its minimum value of zero when all constraint values are less than or equal to $-\Upsilon$, i.e., when all the constraints are satisfied with some margin. 
\begin{lemma}
\label{lemma2}
 For a true parameter $\constr_\kappa$ and a predicted parameter $\predconstr_\kappa$, both
$ \overline{\FPL} ( \constr_\kappa,   \predconstr_\kappa )$  and $ \overline{\IPL} ( \constr_\kappa,   \predconstr_\kappa) $ will be zero 
if and only if $\solution \left(\cost_\kappa,\predconstr_\kappa \right)$, a solution of $\predconstr_\kappa$, is an optimal solution with respect to $\constr_\kappa$.
\end{lemma}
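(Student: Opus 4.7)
The plan is to prove both directions of the biconditional separately, and the forward direction is the substantive one.

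For the forward implication, I would first invoke Lemma~\ref{lemma1} applied to $\overline{\IPL}(\constr_\kappa, \predconstr_\kappa) = 0$ to conclude that $\solution(\cost_\kappa, \predconstr_\kappa) \in \feasspace(\constr_\kappa)$. Next, inspecting the $\relu$ expression for $\overline{\FPL}$ in Eq.~\ref{eq:FPL_absolute}, I would note that $\overline{\FPL}(\constr_\kappa, \predconstr_\kappa) = 0$ forces every summand to vanish, so $g_i(\solution(\cost_\kappa,\constr_\kappa), \predconstr_\kappa) \leq -\Upsilon < 0$ for every $i \in \ConSet$, meaning $\solution(\cost_\kappa,\constr_\kappa) \in \feasspace(\predconstr_\kappa)$. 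With both feasibility facts in hand, I would close the argument via a mutual-optimality chain: since $\solution(\cost_\kappa, \predconstr_\kappa)$ minimises $f(\cdot\,;\cost_\kappa)$ over $\feasspace(\predconstr_\kappa)$ and that set contains $\solution(\cost_\kappa,\constr_\kappa)$, one has $f(\solution(\cost_\kappa, \predconstr_\kappa);\cost_\kappa) \leq f(\solution(\cost_\kappa,\constr_\kappa);\cost_\kappa)$; conversely, since $\solution(\cost_\kappa,\constr_\kappa)$ minimises $f(\cdot\,;\cost_\kappa)$ over $\feasspace(\constr_\kappa) \ni \solution(\cost_\kappa, \predconstr_\kappa)$, the reverse inequality holds. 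Equality of the two objective values, together with the fact that $\solution(\cost_\kappa, \predconstr_\kappa) \in \feasspace(\constr_\kappa)$, places $\solution(\cost_\kappa, \predconstr_\kappa) \in \bm{W}(\cost_\kappa,\constr_\kappa)$, which is the desired optimality.

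For the reverse implication, I would assume $\solution(\cost_\kappa, \predconstr_\kappa)$ is optimal with respect to $\constr_\kappa$. Optimality entails feasibility under $\constr_\kappa$, so Lemma~\ref{lemma1} immediately yields $\overline{\IPL}(\constr_\kappa, \predconstr_\kappa) = 0$. The more delicate piece is $\overline{\FPL}(\constr_\kappa, \predconstr_\kappa) = 0$, which requires the known optimum $\solution(\cost_\kappa,\constr_\kappa)$ to sit inside $\feasspace(\predconstr_\kappa)$ with the margin~$\Upsilon$. Here I would mirror the contradiction step used in Lemma~\ref{lemma1}: since $\solution(\cost_\kappa, \predconstr_\kappa)$ attains the optimum of the $\constr_\kappa$-problem, the value of $\solution(\cost_\kappa,\constr_\kappa)$ equals that of $\solution(\cost_\kappa, \predconstr_\kappa)$ under $\cost_\kappa$, so $\solution(\cost_\kappa,\constr_\kappa)$ may be taken as an optimiser of the $\predconstr_\kappa$-problem as well; any optimiser of a COP satisfies $g_i \leq 0$ on every constraint, and the same strict-interior device used in Lemma~\ref{lemma1} then yields $g_i(\solution(\cost_\kappa,\constr_\kappa), \predconstr_\kappa) \leq -\Upsilon$ for all~$i$, giving $\overline{\FPL} = 0$ via Eq.~\ref{eq:FPL_absolute}.

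The main obstacle I expect is precisely the margin issue in the reverse direction: knowing only that $\solution(\cost_\kappa, \predconstr_\kappa)$ is optimal under $\constr_\kappa$ does not by itself guarantee that $\solution(\cost_\kappa,\constr_\kappa)$ lies in $\feasspace(\predconstr_\kappa)$ with strict slack $\Upsilon$ on every constraint; it only yields the soft inequality $g_i \leq 0$. Closing this gap cleanly may require either echoing the strict-inequality trick embedded in Lemma~\ref{lemma1} (treating $\Upsilon$ as an arbitrarily small positive margin consistent with that lemma's own use of $\Upsilon$) or making explicit a mild regularity assumption that no constraint of $\predconstr_\kappa$ is active within slack $\Upsilon$ at the true optimum. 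I would flag this dependence on $\Upsilon$ in the write-up so that the biconditional is understood to hold under the same margin convention already in force for Lemma~\ref{lemma1}.
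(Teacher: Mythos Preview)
Your forward direction matches the paper's proof exactly: invoke Lemma~\ref{lemma1} for feasibility of $\solution(\cost_\kappa,\predconstr_\kappa)$ under $\constr_\kappa$, read off from $\overline{\FPL}=0$ that $\solution(\cost_\kappa,\constr_\kappa)\in\feasspace(\predconstr_\kappa)$, and chain the two optimality inequalities to obtain equal objective values.

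For the reverse direction, your argument contains a real gap. From the equality $f(\solution(\cost_\kappa,\constr_\kappa);\cost_\kappa)=f(\solution(\cost_\kappa,\predconstr_\kappa);\cost_\kappa)$ you conclude that ``$\solution(\cost_\kappa,\constr_\kappa)$ may be taken as an optimiser of the $\predconstr_\kappa$-problem as well.'' This does not follow: equal objective value says nothing about whether $\solution(\cost_\kappa,\constr_\kappa)$ is even \emph{feasible} under $\predconstr_\kappa$, and without that you cannot deduce $g_i(\solution(\cost_\kappa,\constr_\kappa),\predconstr_\kappa)\leq 0$. The paper sidesteps this by reversing the identification: rather than pushing $\solution(\cost_\kappa,\constr_\kappa)$ into $\feasspace(\predconstr_\kappa)$, it observes that the hypothesis makes $\solution(\cost_\kappa,\predconstr_\kappa)$ itself an element of $\bm{W}(\cost_\kappa,\constr_\kappa)$, hence ``one of the'' $\solution(\cost_\kappa,\constr_\kappa)$. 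Since $\solution(\cost_\kappa,\predconstr_\kappa)$ is trivially feasible under $\predconstr_\kappa$, substituting it for $\solution(\cost_\kappa,\constr_\kappa)$ in Eq.~\ref{eq:FPL_absolute} yields $\overline{\FPL}=0$ directly. Note that this relies on reading $\solution(\cost_\kappa,\constr_\kappa)$ in the OPL formula as any optimal solution rather than a single fixed training label; if the training label is a different optimum, neither argument goes through without further assumption.

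Your concern about the margin $\Upsilon$ is well-placed and applies equally to the paper's proof: feasibility of $\solution(\cost_\kappa,\predconstr_\kappa)$ under $\predconstr_\kappa$ only gives $g_i\leq 0$, not $g_i\leq -\Upsilon$, so the step asserting the analogue of Eq.~\ref{eq:FPL_absolute} equals zero is loose in exactly the way you anticipated. The paper does not address this explicitly; your instinct to flag the dependence on $\Upsilon$ is the right one.
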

\begin{proof}
Let $\overline{\FPL} ( \constr_\kappa,   \predconstr_\kappa )$ and $ \overline{\IPL} ( \constr_\kappa,   \predconstr_\kappa) $ be equal to zero.
    We have proved in Lemma \ref{lemma1} that $ \overline{\IPL}$ being zero implies that a solution of $ \predconstr_\kappa$ is feasible with respect to $\kappa$. So, $\solution \left(\cost_\kappa,\predconstr_\kappa \right)$, a solution of $\predconstr_\kappa$ lies in the set $\feasspace (\constr_\kappa)$. 

    Because $\solution (\cost_\kappa,\constr_\kappa )$ is an optimal solution,
    $$ f(\solution (\cost_\kappa,\constr_\kappa ); \cost_\kappa) \leq f(\decisionvar^\prime; \cost_\kappa) \quad \forall \decisionvar^\prime \in \feasspace (\constr_\kappa) $$
    As $\solution \left(\cost_\kappa,\predconstr_\kappa \right) \in \feasspace (\constr_\kappa)$, from the above argument, we can write 
    \begin{equation}
    \label{eq:lemma2_up}
    f(\solution (\cost_\kappa,\constr_\kappa ); \cost_\kappa) \leq f(\solution \left(\cost_\kappa,\predconstr_\kappa \right); \cost_\kappa) 
    \end{equation}

    Now, $\overline{\FPL} ( \constr_\kappa,   \predconstr_\kappa )$ being zero implies that $\solution (\cost_\kappa,\constr_\kappa )$, an optimal solution of the true parameter, $\constr_\kappa$ lies in $\feasspace (\predconstr_\kappa)$.
    Note, $$f(\solution (\cost_\kappa,\predconstr_\kappa ); \cost_\kappa) \leq f(\decisionvar^\prime; \cost_\kappa) \quad \forall \decisionvar^\prime \in \feasspace (\predconstr_\kappa)$$
    As $\solution (\cost_\kappa,\constr_\kappa ) \in \feasspace (\predconstr_\kappa)$
        \begin{equation}
        \label{eq:lemma2_down}
        f(\solution (\cost_\kappa,\predconstr_\kappa ); \cost_\kappa) \leq f(\solution \left(\cost_\kappa,\constr_\kappa \right); \cost_\kappa)
          \end{equation}
    Hence, Eq.~\ref{eq:lemma2_up} and Eq.~\ref{eq:lemma2_down} suggests
    $$f(\solution (\cost_\kappa,\predconstr_\kappa ); \cost_\kappa) = f(\solution \left(\cost_\kappa,\constr_\kappa \right); \cost_\kappa)$$
    As $\solution (\cost_\kappa,\predconstr_\kappa )$ is a feasible solution with respect to $\constr_\kappa$ and its objective value is equal to the optimal objective value, it is an optimal solution.

    Now, to prove the opposite direction, assume that $\solution \left(\cost_\kappa,\predconstr_\kappa \right)$ is an optimal solution with respect to $\constr_\kappa$. 
    This implies that  $\solution \left(\cost_\kappa,\predconstr_\kappa \right)$ is a feasible solution with respect to $\constr_\kappa$. Therefore, from Lemma \ref{lemma1}, $ \overline{\IPL} ( \constr_\kappa,   \predconstr_\kappa) = 0$.
    As $\solution \left(\cost_\kappa,\predconstr_\kappa \right)$ adheres to all the constraints when $\predconstr_\kappa$ is the constraints parameter value,
    \begin{equation}
        \label{eq:inter_proof2}
    \sum_{ i \in \ConSet}  \max\Big( \left(  \Upsilon + g_i ( \solution \left(\cost_\kappa,\predconstr_\kappa \right), 
    \predconstr_\kappa ) \right) , 0 \Big) = 0
    \end{equation}
    As $\solution \left(\cost_\kappa,\predconstr_\kappa \right)$ is an optimal solution with respect to $\constr_\kappa$, $\solution \left(\cost_\kappa,\predconstr_\kappa \right)$ is one of the $\solution \left(\cost_\kappa,\constr_\kappa \right)$.
    If we replace $\solution \left(\cost_\kappa,\predconstr_\kappa \right)$ with $\solution \left(\cost_\kappa,\constr_\kappa \right)$ in Eq.~\ref{eq:inter_proof2}, 
    $ \overline{ \FPL }(\constr_\kappa,   \predconstr_\kappa )= 0$ by definition (Eq.~\ref{eq:FPL_absolute}).
\end{proof}

\subsection{Minimizing a Convex Combination of $\FPL$ and $\IPL$}
Lemma \ref{lemma2} suggests that by minimizing both $\IPL$ and $\FPL$ to zero, we can obtain a solution $\solution (\cost_\kappa, \predconstr_\kappa)$, which is optimal with respect to $\constr_\kappa$.
However, simultaneously minimizing $\IPL$ and $\FPL$ presents a challenge. This is because $\IPL$ and $\FPL$ represent two conflicting goals. Minimizing only $\FPL$ encourages the model to predict loose constraints so that all true feasible solutions remain feasible, hence making even infeasible solutions appear feasible. Conversely, minimizing $\IPL$ encourages stricter constraints, which may render even true optimal solutions infeasible. 
We thus propose to minimize a convex combination of these losses, in the following form:
\begin{equation}
\label{eq:training_loss}
    \alpha  \IPL (\constr_\kappa, \predconstr_\kappa ) + 
    (1- \alpha)  \FPL (\constr_\kappa, \predconstr_\kappa )
\end{equation}
\noindent $\alpha \in [0,1]$ is a hyperparameter we call the \textit{infeasibility-aversion coefficient}. It reflects how much importance the decision-maker places on avoiding infeasible solutions (through IPL) versus maintaining the feasibility of known feasible ones (through OPL).
The two extremes are $\alpha = 1$ and $\alpha = 0$, both of which represent unrealistic preferences. $\alpha= 1$ means the decision-maker prioritizes only feasibility, possibly ending up with trivial solutions (e.g., picking no items in a knapsack problem). $\alpha$ near 0 implies a focus on high-reward solutions, even if they are infeasible (e.g., selecting all items).

In this section, we have proposed a novel approach for `optimizing decisions through end-to-end constraint estimation' (Odece), involving two novel loss functions: $\IPL$ and $\FPL$. We have proposed to consider a weighted average of the two losses to reflect the decision-maker's subjective preference of infeasibility over suboptimality.  Next, we will experimentally investigate how Odece performs compared to other existing approaches for predicting constraint parameters.

\section{Experimental Evaluation}
We experiment with predicting constraint parameters in the following optimization problems:
\paragraph{Multi-dimensional knapsack problem.}
The first is the 0-1 multi-dimensional knapsack problem (MDKP), introduced in Section \ref{sect:prelim}. In the experimental section, we consider MDKP instances with 50 items and constraints in three dimensions. We consider two settings: 1) predicting the item weight vectors while keeping the capacity constraints known, and 2) predicting the capacity vectors with known item weights.
\paragraph{Brass Alloy Production.}
Our third experimental setting involves the Brass alloy production problem.
We reproduce this experimental setup from the work by \citet{hu2023twostage}. 
It is a covering LP problem. 
We use the publicly available Brass alloy data from their repository.\footnote{\url{https://github.com/Elizabethxyhu/NeurIPS_Two_Stage_Predict-Optimize/}}
The production of the Brass alloy requires two metals: Copper (Cu) and Zinc (Zn).
A factory needs to purchase ores that contain these two metals. 
There are 10 potential suppliers, each offering ores at a different price per unit.
The exact metal content in each supplier's ore is unknown and must be predicted.  The goal is to buy a combination of ores from the 10 suppliers at the lowest possible cost, while ensuring that the total purchase contains at least 627.54 units of Cu and 369.72 units of Zn.
\subsection{Dataset Description}
\paragraph{Synthetic parameter generation of the MDKP.}
We adopt the synthetic data generation process described by \citet{elmachtoub2022smart}.
This data generation approach introduces a non-linear relationship between input features and the unknown parameters. A linear model is used to predict the parameters from features. The motivation behind using a linear model is to demonstrate that DFL methods can produce high-quality solutions despite model misspecification.
This approach has been widely used as a benchmark in  DFL studies \citep{elmachtoub2022smart, ltr,mandi2023decision,pyepo,cave24}. 

Each knapsack instance contains 50 items, with all items correlated with 10 input features. Item values, which appear in the objective function, are sampled independently from a Gumbel distribution with location 100 and scale 20. For each run, 1500 instances are generated, split into 900 for training, 100 for validation, and 500 for testing.
We took extra caution to ensure that the synthetically-generated parameters produce non-trivial optimal solutions, avoiding cases where all or none of the items are selected in the knapsack. To achieve this, we clip the capacity in each dimension to be less than half the total weight of all items and ensure that individual item weights remain below the respective capacity values. 
\paragraph{Brass Alloy Production.} For this experiment, we use the dataset from \citet{hu2023twostage}.
In this dataset, the Cu and Zn contents in each supplier's ore are predicted using separate sets of 4096 features each.
Out of the available 500 instances, we use 350, 50, and 100 for training, validation, and testing, respectively.
We use a fully-connected neural network with one hidden layer of 512 neurons for prediction.

The experiments were executed on an \textsl{Intel i7-13800H} (20 cores) CPU with 32GB RAM.
Details on the problems and datasets used in the three experiments are provided in Appendix~\ref{appendix:optproblem}.
\rev{The source code for reproducing our results is publicly available at:
\url{https://github.com/JayMan91/OdeceDFLforConstraintsNeurips25}.}

\subsection{Experimental Results}
We denote the model as Odece($\alpha$) (e.g., Odece(0.1)) to indicate Odece is trained with that specific value of $\alpha$.
We compare the proposed Odece against \rev{four} competitors: i) \textit{MSE}: a PFL approach which trains the ML model to minimize the MSE loss over the parameter predictions, ii) \textit{CombOptNet}: the technique proposed by \citet{CombOptNet} to compute the gradient of ILP parameters, iii) \textit{SFL}: the solver-free learning proposed by \citet{SolverFreeNEURIPS2022} for learning parameters of an ILP, \rev{ iv)  \textit{2sPtO}: the two-stage predict+optimize approach proposed by \citet{hu2023twostage}. We implement 2sPtO using a high penalty factor, as recommended in Appendix A.2 of their paper. We could not apply 2sPtO to predict the knapsack capacity, as it is designed to predict only the left-hand side parameters in linear constraints. More specifically, it can compute gradients with respect to $b$ in constraints of the form  $b^\top \decisionvar \leq c$, but not with respect to $c$.}
For CombOptNet we compute the L1 loss of the predicted solution and the true solution and backpropagate the L1 loss through CombOptNet.

We evaluate performance using two metrics: the proportion of infeasible solutions and the normalized regret on the test data. Each technique is run five times with different random seeds. Figure~\ref{fig:Scatter} reports the average proportion of infeasible solutions (x-axis) and average normalized regret (y-axis) across these runs. Detailed results, including averages and standard deviations across the five runs, are provided in Appendix~\ref{appendix:Expdetails}. We compute regret only over predicted solutions that are feasible under the true parameters.
More formally, out of $K$ test instances, let $K^\prime$ denote the number of instances where the solution $\solution \left(\cost_\kappa, \predconstr_\kappa \right)$ is feasible under  $\constr_\kappa$. The proportion of infeasible solutions is then $\tfrac{K^\prime}{K}$. The normalized regret is defined as follows:
\begin{equation}
\label{eq:relative_regret}
\frac{1}{K^\prime } \sum_{\kappa=1}^{ K^\prime } \frac{ f \left( \solution \left(\cost_\kappa, \predconstr_\kappa \right) ; \cost_\kappa  \right)  - f( \solution_\kappa ; \cost_\kappa  ) } {  f( \solution_\kappa ; \cost_\kappa  ) }
\end{equation}
\noindent Note that normalized regret can be misleading in this setting. This is because in extreme cases, where most predicted solutions are infeasible, regret is computed over only a small subset of instances.

\begin{figure}
    \centering
\includegraphics[width=\linewidth]{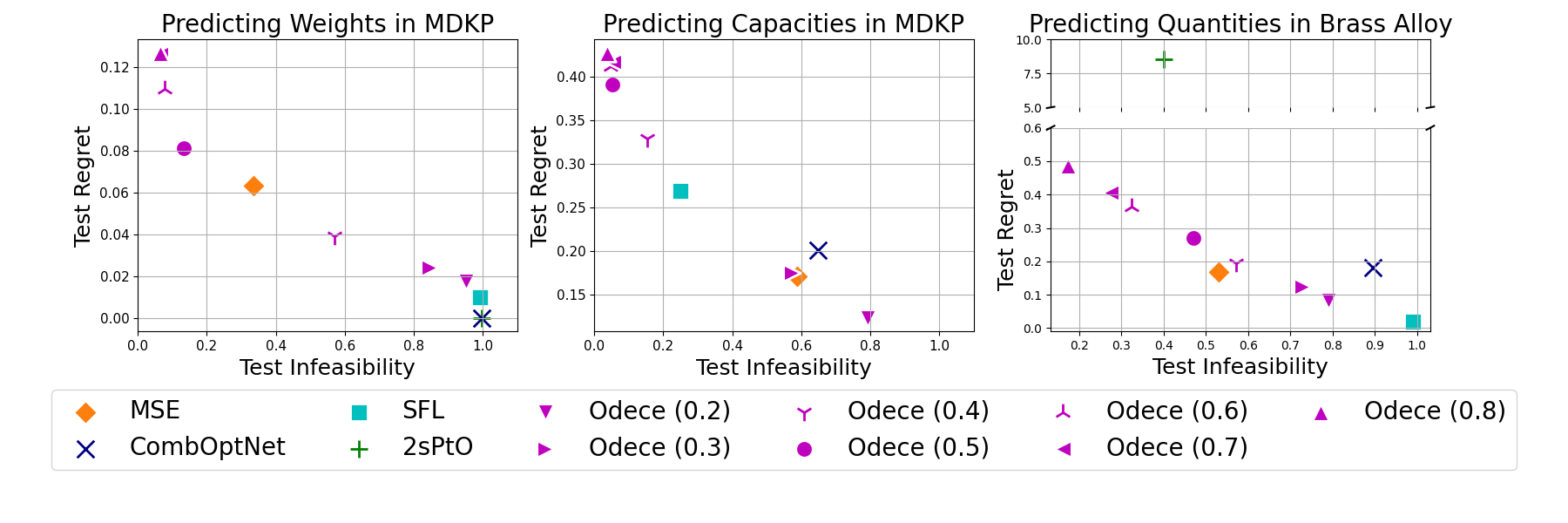}
    \caption{Infeasibility ratio and regret of feasible solutions on Test instances (best viewed in colors).}
    \label{fig:Scatter}
\end{figure}

\paragraph{Results.}
\rev{The first observation from Figure~\ref{fig:Scatter} is that CombOptNet exhibits relatively poor performance across the three problems. For both MDKP weight prediction and the Brass Alloy problem, the proportion of infeasible solutions generated by CombOptNet is very high ($>85\%$). In MDKP capacity prediction, the proportion of infeasible solutions is slightly lower but still high ($>65\%$).
The performance of SFL is similar to CombOptNet for MDKP weight prediction and the Brass Alloy problem. However, in the MDKP capacity prediction problem, SFL performs significantly better, with the proportion of infeasible solutions around $25\%$. We suspect this is because SFL relies more on $c$ than $b$ to separate the true optimal solution from the \emph{negative} assignments, resulting in more informative gradients for $c$ and less informative gradients for $b$.
(For example, in a 3-dimensional MDKP where the optimal solution is $[0,0,1]$, negative samples such as $[1,1,1]$,$[1,1,0]$, $[0,1,1]$ and $[1,0,1]$ can be classified as negative based on the capacity, i.e., the right-hand side parameter, $c$.)

For MDKP weight prediction, 2sPtO performs very poorly (the proportion of infeasible solutions is greater than $99\%$). It is not applicable for MDKP capacity prediction. 
Figure~\ref{fig:Scatter} might suggest that 2sPtO achieves a relatively lower proportion of infeasible solutions for the Brass Alloy problem, but this is misleading for two reasons. First, it results in extremely high normalized regret ($>850\%$). Note that we have to break the y-axis in Figure~\ref{fig:Scatter} to accommodate 2sPtO's regret values.
More importantly, its performance is highly unstable, which is not noticeable in Figure~\ref{fig:Scatter}, where the average of five runs is plotted.
Table~\ref{tab:Alloyregret_infeasibility} in the Appendix reveals that the standard deviation across the five runs is very high. In some runs, the proportion of infeasible solutions is near 0, while in others it reaches $\approx 99\%$.
This indicates unstable learning.}

Our closest competitor is MSE, but minimizing MSE does not let decision-makers adjust the trade-off between optimality and feasibility.
MSE has infeasibility rates of 33\%, 60\%, and 53\% across the three problems, respectively. For a decision-maker, these infeasibility rates may be unacceptably high. Odece allows a decision-maker to attain lower infeasibility rate by varying $\alpha$, at the cost of higher normalized regret.
For instance, Odece(0.8) brings the infeasibility rates down to 6\%, 4\%, and 18\%, across the three problems, respectively.
\rev{We have also conducted statistical significance tests to determine whether the test regret and infeasibility of Odece are lower than those of MSE. Since each run was conducted using the same random seed for both models, we used a paired t-test with the alternative hypothesis that MSE has higher regret and higher infeasibility than Odece. The p-values from the paired t-tests for different values of $\alpha$ are reported in Table \ref{tab:stat_kpweight}, \ref{tab:stat_kpcapa} and \ref{tab:stat_alloy} in Appendix~\ref{appendix:Expdetails}. The results of the statistical significance test reveal that for $\alpha >0.5$,
Odece produces a statistically significantly lower proportion of infeasible solutions than MSE. On the other hand, for $\alpha <0.3$, the normalized regret on feasible instances generated by Odece is statistically significantly lower than that of MSE. This holds true across all three problems.}

\section{Conclusion}
We proposed Odece, a novel DFL technique for predicting parameters appearing in the constraints of COPs. Unlike existing DFL methods for constraint predictions -- which either depend on problem-specific second-stage corrective actions or which are designed to predict the full set of constraints -- Odece directly optimizes for feasibility and solution quality in a single stage.
Our experiments showed that by tuning Odece's $\alpha$ parameter, decision-makers can flexibly manage the trade-off between infeasibility and suboptimality, allowing them to align solutions with their individual preferences.

One limitation of the current Odece implementation is that it solves the COP for each instance using the predicted parameters during training. To speed up training, future work could explore caching candidate assignments, as done by \citet{mulamba2020discrete} for predicting objective parameters. Odece could also be extended to jointly predict both constraint and objective parameters. Finally, we plan to apply Odece to a wider range of optimization problems, such as combinatorial problems with non-linear objective and constraint functions, as well as larger, real-world applications.

%

\begin{ack}
This research received funding from the European Research Council (ERC) under the European Union's Horizon 2020 Research and Innovation Programme (Grant No. 101002802, CHAT-Opt), and from the Research Foundation Flanders (FWO) project G0G3220N. Senne Berden is a fellow of the Research Foundation-Flanders (FWO-Vlaanderen, 11PQ024N).

\end{ack}

\bibliography{myref}
\bibliographystyle{abbrvnat}

\appendix

\section{Problem and Dataset Description}
\label{appendix:optproblem}
\subsection{Multi-dimensional knapsack problem}
The formulation of the optimization problem is as follows:
\begin{equation}
    \min_{x_{1:N}} \sum_{n=1}^N (-q_n) x_n \quad \text{such that} \quad x_n \in \{0,1\},\ \forall n \in [N], \quad \sum_{n=1}^N \rho_{ni} x_n \leq \rho_i,\ \forall i \in [M]
\end{equation}
\noindent Here, $q_n$ and $\rho_{ni}$ are the values and weights in dimension $i$ of item $n$. 
We synthetically generate 1500 MDKP instances for each run. Of these, 900 are used for training, 100 for validation, and 500 for testing.
We represent the dataset as
$ \{ (\feature_\kappa, \constr_\kappa, \cost_\kappa, \solution (\cost_\kappa, \constr_\kappa)) \}_{\kappa=1}^K $
where $\feature_\kappa$ is the feature vector, $\constr_\kappa$ is the concatenated vector of weights and capacity, $\cost_\kappa$ is the objective parameter, and $\solution (\cost_\kappa, \constr_\kappa)$ denotes the solution to instance $\kappa$.
In our experiment, $\feature_\kappa$ is of dimension 10. The feature vectors are sampled from a multivariate Gaussian distribution with zero mean and unit variance, i.e., $\feature_\kappa \sim \mathbf{N} (0, \mathbf{I}_{10})$.
\subsubsection{Data Generation for Unknown Weight Vectors}
We generate our dataset similarly to \citet{elmachtoub2022smart} and \citet{pyepo}. While these works consider predicting parameters in the objective function, we focus on predicting parameters in the constraints. We use the same data generation process to synthetically create constraint parameters from features.

To generate the weight vector, first a matrix $B \in \mathbb{R}^{M \times N \times 10} $ is generated, which represents the true underlying model, unknown to the decision-maker. Each entry of the matrix $B$ is sampled from independent Bernoulli distributions of probability 0.5.
The weight $\rho_{ni}^{(\kappa)}$, for $\kappa$-th instance, is then generated according to the following formula:
\begin{equation}
   \rho_{ni}^{(\kappa)} = \bigg [ \frac{1}{3.5^{\text{Deg}}}\bigg(\frac{1}{\sqrt{10}} \big( \feature_\kappa^\top B[i,n] \big) +3  \bigg)^{\text{Deg}  } +1 \bigg]\xi_{ni}^{(\kappa)}
\end{equation}
The \emph{Deg} is `model misspecification' parameter. This is because a linear model is used as a predictive model in the experiment and a higher value of \emph{Deg} indicates the predictive model deviates more from the true underlying model and larger the prediction errors. $\xi_{ni}^{(\kappa)}$ is a multiplicative noise term sampled randomly from the uniform distribution $\xi_{ni}^{(\kappa)} \sim U[1- w, 1+ w ]$. We report results for \emph{Deg} $=6$ and $w=0.25$.

We generated the capacity $\rho_i^{(\kappa)}$, the the following ways:
\begin{equation}
   \rho_i^{(\kappa)} = r * \xi_i^{(\kappa)} \bigg [\sum_{n=1}^N \sum_{\iota = 1}^{10} B[i,n,\iota]  \bigg]
\end{equation}
\noindent  
The term inside the summation represents an upper bound on the total weight along dimension $i$. Therefore, setting the capacity vector equal to this summation would always allow all items to be selected in the true solution, resulting in a trivial case. To avoid this, we add $r \in [0,1]$. It tightens the capacity constraints, thereby preventing trivial solutions where all items are feasibly selected. In our experiments, we set $r$ to 0.2. Due to the multiplicative noise term $\xi_i^{(\kappa)} \sim U[1- w, 1+ w ]$, the capacity vectors for each instance are not exactly the same.
\begin{table}
\caption{Test Regret and Infeasibility of the Models on MDKP Weight Prediction.}
\label{tab:Weightregret_infeasibility}
\centering
\begin{tabular}{lrrrr}
\toprule
 & \multicolumn{2}{c}{Infeasibility} &  \multicolumn{2}{c}{Regret} \\
 & Avg & Sd & Avg & Sd \\
Model &  &  &  &  \\
\midrule
MSE & 0.335 & 0.015 & 0.063 & 0.003 \\
CombOptNet & 0.996 & 0.003 & 0.000 & 0.000 \\
SFL & 0.992 & 0.004 & 0.010 & 0.013 \\
2sPtO & 0.996 & 0.003 & 0.000 & 0.000 \\
Odece(0.2) & 0.952 & 0.020 & 0.017 & 0.002 \\
Odece(0.3) & 0.845 & 0.019 & 0.024 & 0.002 \\
Odece(0.4) & 0.570 & 0.050 & 0.039 & 0.002 \\
Odece(0.5) & 0.134 & 0.023 & 0.081 & 0.004 \\
Odece(0.6) & 0.079 & 0.027 & 0.110 & 0.009 \\
Odece(0.7) & 0.066 & 0.021 & 0.126 & 0.011 \\
Odece(0.8) & 0.065 & 0.025 & 0.127 & 0.007 \\

\bottomrule
\end{tabular}
\end{table}

\subsubsection{Data Generation for Unknown Capacity Vectors}
We again follow an approach similar to \citet{pyepo} to construct this dataset. 
To generate the capacity vector, we first create a matrix, $B \in \mathbb{R}^{M  \times 10} $ is generated, which represents the true underlying model. Each entry of the matrix $B$ is sampled from independent Bernoulli distributions of probability 0.5 like before.
The $i$-th dimensional capacity $\rho_i^{(\kappa)}$, for $\kappa$-th instance, is then generated according to the following formula:
\begin{equation}
   \rho_i^{(\kappa)} = \bigg [ \frac{1}{3.5^{\text{Deg}}}\bigg(\frac{1}{\sqrt{10}} \big( \feature_\kappa^\top B[i] \big) +3  \bigg)^{\text{Deg}  } +1 \bigg]\xi_i^{(\kappa)}
\end{equation}
However, directly using this capacity value would result in capacities that are too low for most instances, leading to solutions where no items are selected. To obviate such scenarios, we scale the capacity vector by a factor $r=0.5 * N$, where N is the number of items. 
Further, to preserve the dependency between the feature vector and the capacity vector, we multiply the feature vector by the same scaling factor $r$.
\subsection{Brass alloy production problem}
We reproduce this experimental setup from the work by \citet{hu2023twostage}. 
It is actually a covering LP problem. 
The production of the Brass alloy requires two metals -- Copper (Cu) and Zinc (Zn).
A factory needs to purchase ores that contain these two metals. 
These metals must be sourced from multiple suppliers, each offering metal ores at different prices per unit.
The factory's goal is to purchase a combination of ores from these suppliers that minimizes the total cost while ensuring that the total quantity of each metal meets the production requirements.
The optimization problem can be formally written as:
\begin{equation}
    \min_{x_{1:N}} \sum_{n=1}^N q_n x_n  \quad \text{such that} \quad x_n \geq 0  \quad \  \forall n \in [N], \quad \sum_{n=1}^N \rho_{ni} x_n \geq \rho_i,\ \forall i \in [M]
\end{equation}
\noindent where the decision variable $x_n$ represents the amount of ore to be purchased from supplier $n$, and $q_n$ is the cost per unit from that supplier. $\rho_{ni}$ denotes amount of metal $i$ in one unit of ore from supplier $n$.  The constraints ensure that the total acquired amount of each metal $i$ across all suppliers sums up more than $\rho_i$, the required quantity for production.
\begin{table}
\caption{Test Regret and Infeasibility of the Models on MDKP Capacity Prediction.}
\label{tab:Caparegret_infeasibility}
\centering
\begin{tabular}{lrrrr}
\toprule
 & \multicolumn{2}{c}{Infeasibility} &  \multicolumn{2}{c}{Regret} \\
 & Avg & Sd & Avg & Sd \\
Model &  &  &  &  \\
\midrule
MSE & 0.588 & 0.024 & 0.171 & 0.010 \\
CombOptNet & 0.647 & 0.092 & 0.202 & 0.024 \\
SFL & 0.251 & 0.124 & 0.269 & 0.044 \\
Odece(0.2) & 0.791 & 0.077 & 0.123 & 0.022 \\
Odece(0.3) & 0.572 & 0.269 & 0.175 & 0.086 \\
Odece(0.4) & 0.153 & 0.184 & 0.330 & 0.097 \\
Odece(0.5) & 0.053 & 0.020 & 0.392 & 0.039 \\
Odece(0.6) & 0.049 & 0.024 & 0.412 & 0.058 \\
Odece(0.7) & 0.055 & 0.020 & 0.417 & 0.042 \\
Odece(0.8) & 0.038 & 0.008 & 0.428 & 0.042 \\

\bottomrule
\end{tabular}
\end{table}

\begin{table}
\caption{Test Regret and Infeasibility of the Models on Alloy Production Problem.}
\label{tab:Alloyregret_infeasibility}
\centering
\begin{tabular}{lrrrr}
\toprule
 & \multicolumn{2}{c}{Infeasibility} &  \multicolumn{2}{c}{Regret} \\
 & Avg & Sd & Avg & Sd \\
Model &  &  &  &  \\
\midrule
MSE & 0.532 & 0.005 & 0.169 & 0.004 \\
CombOptNet & 0.896 & 0.233 & 0.181 & 0.002 \\
SFL & 0.991 & 0.013 & 0.019 & 0.001 \\
2sPtO & 0.400 & 0.548 & 8.539 & 6.691 \\
Odece(0.2) & 0.790 & 0.064 & 0.081 & 0.033 \\
Odece(0.3) & 0.729 & 0.078 & 0.123 & 0.025 \\
Odece(0.4) & 0.571 & 0.085 & 0.195 & 0.041 \\
Odece(0.5) & 0.471 & 0.123 & 0.270 & 0.054 \\
Odece(0.6) & 0.324 & 0.042 & 0.365 & 0.045 \\
Odece(0.7) & 0.276 & 0.026 & 0.406 & 0.037 \\
Odece(0.8) & 0.174 & 0.029 & 0.486 & 0.037 \\

\bottomrule
\end{tabular}
\end{table}

In the Brass alloy problem instances, there are 10 customers and the factory needs at least 627.54 units of Cu and 369.72 units of Zn. We consider the data for Brass alloy, which is publicly available in the repository \footnote{\url{https://github.com/Elizabethxyhu/NeurIPS_Two_Stage_Predict-Optimize/}}.
The dataset contains 500 instances. In each instance, there are 4096-dimensional feature vector for predicting the values of each $\rho_{ni}$. We use 350 instances for training, 50 for validation, and 100 for testing.

\section{Additional Experimental Results}\label{appendix:Expdetails}
In this section of the Appendix, we provide additional details on the experimental results that could not be included in the main text due to space limitations. Specifically, we report the average and standard deviation of test regret and infeasibility across five runs for each model.
Table \ref{tab:Weightregret_infeasibility}, Table \ref{tab:Caparegret_infeasibility} and Table \ref{tab:Alloyregret_infeasibility} present the average average and standard deviation of test regret and infeasibility across five runs for MDKP weight prediction, MDKP capacity prediction and Brass alloy production problem respectively.


\paragraph{Test of statistical significance.}

\begin{table}[h]
    \centering
    \begin{tabular}{c c c}
        \toprule
        $\alpha$ & p-value (infeasibility) & p-value (regret) \\
        \midrule
        0.8 & $3 \times 10^{-6}$  & 0.999 \\
         0.7 & $3 \times 10^{-5}$ & 0.999 \\
          0.6 & $5 \times 10^{-6}$ & 0.999 \\
           0.5 & $5 \times 10^{-5}$ & 0.999 \\
            0.4 & 0.999  & $1.50 \times 10^{-4}$ \\
             0.3 & 0.999 & $4 \times 10^{-6}$ \\
             0.2 & 0.999  & $3 \times 10^{-6}$ \\
        \bottomrule
    \end{tabular}
    \caption{P-values from paired t-tests comparing MSE and Odece for infeasibility and regret with different values of $\alpha$ for MDKP weight predictions.}
    \label{tab:stat_kpweight}
\end{table}

For predicting capacity, we observe in Table \ref{tab:stat_kpcapa}, Odece has significantly lower infeasibility and higher regret till $\alpha = 0.4$. We also observe the same for the alloy production problem in Table \ref{tab:stat_alloy}.
\begin{table}[h]
    \centering
    \begin{tabular}{c c c}
        \toprule
        $\alpha$ & p-value (infeasibility) & p-value (regret) \\
        \midrule
        0.8 & $6 \times 10^{-7}$  & 0.995\\
         0.7 & $2 \times 10^{-7}$  & 0.995 \\
          0.6 &  $9 \times 10^{-7}$ & 0.999 \\
           0.5 & $1 \times 10^{-6}$ & 0.997 \\
            0.4 & 0.003 & 0.999 \\
             0.3 & 0.450 & 0.545\\
             0.2 & 0.997 & 0.040\\
        \bottomrule
    \end{tabular}
    \caption{P-values from paired t-tests comparing MSE and Odece for infeasibility and regret with different values of $\alpha$ for MDKP capacity predictions.}
    \label{tab:stat_kpcapa}
\end{table}
We conducted statistical significance tests to determine whether the test regret and infeasibility of Odece are lower than those of MSE. Since each run was conducted using the same random seed for both models, we used a paired t-test with the alternative hypothesis that MSE has higher regret and higher infeasibility than Odece. The p-values from the paired t-tests for different values of $\alpha$ for MDKP weight predictions are reported in Table \ref{tab:stat_kpweight}. It tells that for $\alpha \geq0.5$, Odece has significantly lower infeasibility and higher regret than MSE. The opposite is true when $\alpha$ goes below 0.4.

\begin{table}[h]
    \centering
    \begin{tabular}{c c c}
        \toprule
        $\alpha$ & p-value (infeasibility) & p-value (regret) \\
        \midrule
        0.8 & $4 \times 10^{-6}$  & 0.999\\
         0.7 & $7 \times 10^{-6}$  &  0.999\\
          0.6 &  0.002 &  0.999\\
           0.5 & 0.160 & 0.993 \\
            0.4 & 0.848 &  0.882\\
             0.3 & 0.999 & $0.007$\\
             0.2 & 0.999 & $0.002$ \\
        \bottomrule
    \end{tabular}
    \caption{P-values from paired t-tests comparing MSE and Odece for infeasibility and regret with different values of $\alpha$ for Alloy Production Problem.}
    \label{tab:stat_alloy}
\end{table}

\clearpage
\section{ Details about Hyperparameter Configuration}
\label{appendix:hyperparams}
Finally, in Table \ref{tab:hyperapram}, we provide the hyperparameter configurations used for each model to reproduce the results reported above. 

\begin{table}[h]
\centering
\caption{Hyperparameter configurations for each model and task.}
\label{tab:hyperapram}
\resizebox{\textwidth}{!}{
\begin{tabular}{llccccc}
\toprule
\textbf{Task} & \textbf{Hyperparameter} & \textbf{MSE} & \textbf{Odece} & \textbf{CombOptNet} & \textbf{SFL} & \textbf{2sPtO} \\
\midrule
\multirow{3}{*}{MDKP Weight Prediction}
 & Learning Rate           & 0.05 & 0.05 & 0.05 & 0.05 & 0.05\\
 & $\tau$ (CombOptNet)     &      &      & 0.50 &      \\
 & Temperature (SFL) &      &      &      & 0.50 \\
 & {damping} (2sPtO) & & & & & 0.01 \\
 & {thr} (2sPtO) & & & & & 0.1 \\
\midrule
\multirow{3}{*}{MDKP Capacity Prediction}
 & Learning Rate           & 0.005 & 0.005 & 0.005 & 0.01 \\
 & $\tau$ (CombOptNet)     &      &      & 0.1 &      \\
 & Temperature (SFL) &      &      &      & 0.1 \\
\midrule
\multirow{3}{*}{Brass Alloy}
 & Learning Rate           & 0.001 & 0.001 & 0.005 & 0.001 & 0.05\\
 & $\tau$ (CombOptNet)     &      &      & 0.5 &      \\
 & Temperature (SFL) &      &      &      & 0.5 \\
  & {damping} (2sPtO) & & & & & 0.01 \\
 & {thr} (2sPtO) & & & & & 0.1 \\

\bottomrule
\end{tabular}
}
\end{table}


\newpage
\section*{NeurIPS Paper Checklist}

\begin{enumerate}

\item {\bf Claims}
    \item[] Question: Do the main claims made in the abstract and introduction accurately reflect the paper's contributions and scope?
    \item[] Answer: \answerYes{} 
    \item[] Justification: Yes, the main claims in the abstract and introduction accurately reflect the paper’s contributions and scope. Yes, the main claims in the abstract and introduction accurately reflect the paper’s contributions and scope. The paper is motivated by the need to develop a novel approach for predicting unknown parameters of an optimization problem using correlated features, while explicitly accounting for the subjective preferences of the decision-maker regarding infeasibility and suboptimality. This motivation, along with the core contributions, is clearly stated in both the abstract and the introduction. 
    \item[] Guidelines:
    \begin{itemize}
        \item The answer NA means that the abstract and introduction do not include the claims made in the paper.
        \item The abstract and/or introduction should clearly state the claims made, including the contributions made in the paper and important assumptions and limitations. A No or NA answer to this question will not be perceived well by the reviewers. 
        \item The claims made should match theoretical and experimental results, and reflect how much the results can be expected to generalize to other settings. 
        \item It is fine to include aspirational goals as motivation as long as it is clear that these goals are not attained by the paper. 
    \end{itemize}

\item {\bf Limitations}
    \item[] Question: Does the paper discuss the limitations of the work performed by the authors?
    \item[] Answer: \answerYes{} 
    \item[] Justification: Yes, the paper provides a brief summary of the limitations, which we aim to address in future work. 
    \item[] Guidelines:
    \begin{itemize}
        \item The answer NA means that the paper has no limitation while the answer No means that the paper has limitations, but those are not discussed in the paper. 
        \item The authors are encouraged to create a separate "Limitations" section in their paper.
        \item The paper should point out any strong assumptions and how robust the results are to violations of these assumptions (e.g., independence assumptions, noiseless settings, model well-specification, asymptotic approximations only holding locally). The authors should reflect on how these assumptions might be violated in practice and what the implications would be.
        \item The authors should reflect on the scope of the claims made, e.g., if the approach was only tested on a few datasets or with a few runs. In general, empirical results often depend on implicit assumptions, which should be articulated.
        \item The authors should reflect on the factors that influence the performance of the approach. For example, a facial recognition algorithm may perform poorly when image resolution is low or images are taken in low lighting. Or a speech-to-text system might not be used reliably to provide closed captions for online lectures because it fails to handle technical jargon.
        \item The authors should discuss the computational efficiency of the proposed algorithms and how they scale with dataset size.
        \item If applicable, the authors should discuss possible limitations of their approach to address problems of privacy and fairness.
        \item While the authors might fear that complete honesty about limitations might be used by reviewers as grounds for rejection, a worse outcome might be that reviewers discover limitations that aren't acknowledged in the paper. The authors should use their best judgment and recognize that individual actions in favor of transparency play an important role in developing norms that preserve the integrity of the community. Reviewers will be specifically instructed to not penalize honesty concerning limitations.
    \end{itemize}

\item {\bf Theory assumptions and proofs}
    \item[] Question: For each theoretical result, does the paper provide the full set of assumptions and a complete (and correct) proof?
    \item[] Answer: \answerYes{} 
    \item[] Justification: Yes, we have provided proofs for the theorems proposed in the paper.
    \item[] Guidelines:
    \begin{itemize}
        \item The answer NA means that the paper does not include theoretical results. 
        \item All the theorems, formulas, and proofs in the paper should be numbered and cross-referenced.
        \item All assumptions should be clearly stated or referenced in the statement of any theorems.
        \item The proofs can either appear in the main paper or the supplemental material, but if they appear in the supplemental material, the authors are encouraged to provide a short proof sketch to provide intuition. 
        \item Inversely, any informal proof provided in the core of the paper should be complemented by formal proofs provided in appendix or supplemental material.
        \item Theorems and Lemmas that the proof relies upon should be properly referenced. 
    \end{itemize}

    \item {\bf Experimental result reproducibility}
    \item[] Question: Does the paper fully disclose all the information needed to reproduce the main experimental results of the paper to the extent that it affects the main claims and/or conclusions of the paper (regardless of whether the code and data are provided or not)?
    \item[] Answer: \answerYes{}  
    \item[] Justification: We are providing the code and data to reproduce the main experimental results of the paper.
    \item[] Guidelines:
    \begin{itemize}
        \item The answer NA means that the paper does not include experiments.
        \item If the paper includes experiments, a No answer to this question will not be perceived well by the reviewers: Making the paper reproducible is important, regardless of whether the code and data are provided or not.
        \item If the contribution is a dataset and/or model, the authors should describe the steps taken to make their results reproducible or verifiable. 
        \item Depending on the contribution, reproducibility can be accomplished in various ways. For example, if the contribution is a novel architecture, describing the architecture fully might suffice, or if the contribution is a specific model and empirical evaluation, it may be necessary to either make it possible for others to replicate the model with the same dataset, or provide access to the model. In general. releasing code and data is often one good way to accomplish this, but reproducibility can also be provided via detailed instructions for how to replicate the results, access to a hosted model (e.g., in the case of a large language model), releasing of a model checkpoint, or other means that are appropriate to the research performed.
        \item While NeurIPS does not require releasing code, the conference does require all submissions to provide some reasonable avenue for reproducibility, which may depend on the nature of the contribution. For example
        \begin{enumerate}
            \item If the contribution is primarily a new algorithm, the paper should make it clear how to reproduce that algorithm.
            \item If the contribution is primarily a new model architecture, the paper should describe the architecture clearly and fully.
            \item If the contribution is a new model (e.g., a large language model), then there should either be a way to access this model for reproducing the results or a way to reproduce the model (e.g., with an open-source dataset or instructions for how to construct the dataset).
            \item We recognize that reproducibility may be tricky in some cases, in which case authors are welcome to describe the particular way they provide for reproducibility. In the case of closed-source models, it may be that access to the model is limited in some way (e.g., to registered users), but it should be possible for other researchers to have some path to reproducing or verifying the results.
        \end{enumerate}
    \end{itemize}

\item {\bf Open access to data and code}
    \item[] Question: Does the paper provide open access to the data and code, with sufficient instructions to faithfully reproduce the main experimental results, as described in supplemental material?
    \item[] Answer: \answerYes{} 
    \item[] Justification: We will make the data and code publicly accessible upon acceptance of the paper.
    \item[] Guidelines:
    \begin{itemize}
        \item The answer NA means that paper does not include experiments requiring code.
        \item Please see the NeurIPS code and data submission guidelines (\url{https://nips.cc/public/guides/CodeSubmissionPolicy}) for more details.
        \item While we encourage the release of code and data, we understand that this might not be possible, so “No” is an acceptable answer. Papers cannot be rejected simply for not including code, unless this is central to the contribution (e.g., for a new open-source benchmark).
        \item The instructions should contain the exact command and environment needed to run to reproduce the results. See the NeurIPS code and data submission guidelines (\url{https://nips.cc/public/guides/CodeSubmissionPolicy}) for more details.
        \item The authors should provide instructions on data access and preparation, including how to access the raw data, preprocessed data, intermediate data, and generated data, etc.
        \item The authors should provide scripts to reproduce all experimental results for the new proposed method and baselines. If only a subset of experiments are reproducible, they should state which ones are omitted from the script and why.
        \item At submission time, to preserve anonymity, the authors should release anonymized versions (if applicable).
        \item Providing as much information as possible in supplemental material (appended to the paper) is recommended, but including URLs to data and code is permitted.
    \end{itemize}

\item {\bf Experimental setting/details}
    \item[] Question: Does the paper specify all the training and test details (e.g., data splits, hyperparameters, how they were chosen, type of optimizer, etc.) necessary to understand the results?
    \item[] Answer: \answerYes{}
    \item[] Justification: Yes, we have described the training, test, and validation split, as well as the approach used for hyperparameter selection. For detailed information on the hyperparameters, we refer the reader to the Appendix \ref{appendix:hyperparams}. 
    \item[] Guidelines:
    \begin{itemize}
        \item The answer NA means that the paper does not include experiments.
        \item The experimental setting should be presented in the core of the paper to a level of detail that is necessary to appreciate the results and make sense of them.
        \item The full details can be provided either with the code, in appendix, or as supplemental material.
    \end{itemize}

\item {\bf Experiment statistical significance}
    \item[] Question: Does the paper report error bars suitably and correctly defined or other appropriate information about the statistical significance of the experiments?
    \item[] Answer: \answerYes{} 
    \item[] Justification: In the main text, we report only average of five runs for all the techniques. But we report  standard deviation and conduct statistical significance in Appendix \ref{appendix:Expdetails}.
    \item[] Guidelines:
    \begin{itemize}
        \item The answer NA means that the paper does not include experiments.
        \item The authors should answer "Yes" if the results are accompanied by error bars, confidence intervals, or statistical significance tests, at least for the experiments that support the main claims of the paper.
        \item The factors of variability that the error bars are capturing should be clearly stated (for example, train/test split, initialization, random drawing of some parameter, or overall run with given experimental conditions).
        \item The method for calculating the error bars should be explained (closed form formula, call to a library function, bootstrap, etc.)
        \item The assumptions made should be given (e.g., Normally distributed errors).
        \item It should be clear whether the error bar is the standard deviation or the standard error of the mean.
        \item It is OK to report 1-sigma error bars, but one should state it. The authors should preferably report a 2-sigma error bar than state that they have a 96\% CI, if the hypothesis of Normality of errors is not verified.
        \item For asymmetric distributions, the authors should be careful not to show in tables or figures symmetric error bars that would yield results that are out of range (e.g. negative error rates).
        \item If error bars are reported in tables or plots, The authors should explain in the text how they were calculated and reference the corresponding figures or tables in the text.
    \end{itemize}

\item {\bf Experiments compute resources}
    \item[] Question: For each experiment, does the paper provide sufficient information on the computer resources (type of compute workers, memory, time of execution) needed to reproduce the experiments?
    \item[] Answer:\answerYes{} 
    \item[] Justification:  We have provided sufficient information on the computational resources used. 
    \item[] Guidelines:
    \begin{itemize}
        \item The answer NA means that the paper does not include experiments.
        \item The paper should indicate the type of compute workers CPU or GPU, internal cluster, or cloud provider, including relevant memory and storage.
        \item The paper should provide the amount of compute required for each of the individual experimental runs as well as estimate the total compute. 
        \item The paper should disclose whether the full research project required more compute than the experiments reported in the paper (e.g., preliminary or failed experiments that didn't make it into the paper). 
    \end{itemize}
    
\item {\bf Code of ethics}
    \item[] Question: Does the research conducted in the paper conform, in every respect, with the NeurIPS Code of Ethics \url{https://neurips.cc/public/EthicsGuidelines}?
    \item[] Answer: \answerYes{} 
    \item[] Justification: We do not resort to crowdsourcing or contract work for our research. 
    \item[] Guidelines:
    \begin{itemize}
        \item The answer NA means that the authors have not reviewed the NeurIPS Code of Ethics.
        \item If the authors answer No, they should explain the special circumstances that require a deviation from the Code of Ethics.
        \item The authors should make sure to preserve anonymity (e.g., if there is a special consideration due to laws or regulations in their jurisdiction).
    \end{itemize}

\item {\bf Broader impacts}
    \item[] Question: Does the paper discuss both potential positive societal impacts and negative societal impacts of the work performed?
    \item[] Answer: \answerNA{} 
    \item[] Justification: Our work is on fundamental research and we do not find any direct path to any negative  societal impacts. 
    \item[] Guidelines:
    \begin{itemize}
        \item The answer NA means that there is no societal impact of the work performed.
        \item If the authors answer NA or No, they should explain why their work has no societal impact or why the paper does not address societal impact.
        \item Examples of negative societal impacts include potential malicious or unintended uses (e.g., disinformation, generating fake profiles, surveillance), fairness considerations (e.g., deployment of technologies that could make decisions that unfairly impact specific groups), privacy considerations, and security considerations.
        \item The conference expects that many papers will be foundational research and not tied to particular applications, let alone deployments. However, if there is a direct path to any negative applications, the authors should point it out. For example, it is legitimate to point out that an improvement in the quality of generative models could be used to generate deepfakes for disinformation. On the other hand, it is not needed to point out that a generic algorithm for optimizing neural networks could enable people to train models that generate Deepfakes faster.
        \item The authors should consider possible harms that could arise when the technology is being used as intended and functioning correctly, harms that could arise when the technology is being used as intended but gives incorrect results, and harms following from (intentional or unintentional) misuse of the technology.
        \item If there are negative societal impacts, the authors could also discuss possible mitigation strategies (e.g., gated release of models, providing defenses in addition to attacks, mechanisms for monitoring misuse, mechanisms to monitor how a system learns from feedback over time, improving the efficiency and accessibility of ML).
    \end{itemize}
    
\item {\bf Safeguards}
    \item[] Question: Does the paper describe safeguards that have been put in place for responsible release of data or models that have a high risk for misuse (e.g., pretrained language models, image generators, or scraped datasets)?
    \item[] Answer: \answerNA{} 
    \item[] Justification: The paper presents a fundamental research  and does not release any model or data that can be easily misused. 
    \item[] Guidelines:
    \begin{itemize}
        \item The answer NA means that the paper poses no such risks.
        \item Released models that have a high risk for misuse or dual-use should be released with necessary safeguards to allow for controlled use of the model, for example by requiring that users adhere to usage guidelines or restrictions to access the model or implementing safety filters. 
        \item Datasets that have been scraped from the Internet could pose safety risks. The authors should describe how they avoided releasing unsafe images.
        \item We recognize that providing effective safeguards is challenging, and many papers do not require this, but we encourage authors to take this into account and make a best faith effort.
    \end{itemize}

\item {\bf Licenses for existing assets}
    \item[] Question: Are the creators or original owners of assets (e.g., code, data, models), used in the paper, properly credited and are the license and terms of use explicitly mentioned and properly respected?
    \item[] Answer: \answerYes{} 
    \item[] Justification: Yes, we cite the source of the data used in the paper 
    \item[] Guidelines:
    \begin{itemize}
        \item The answer NA means that the paper does not use existing assets.
        \item The authors should cite the original paper that produced the code package or dataset.
        \item The authors should state which version of the asset is used and, if possible, include a URL.
        \item The name of the license (e.g., CC-BY 4.0) should be included for each asset.
        \item For scraped data from a particular source (e.g., website), the copyright and terms of service of that source should be provided.
        \item If assets are released, the license, copyright information, and terms of use in the package should be provided. For popular datasets, \url{paperswithcode.com/datasets} has curated licenses for some datasets. Their licensing guide can help determine the license of a dataset.
        \item For existing datasets that are re-packaged, both the original license and the license of the derived asset (if it has changed) should be provided.
        \item If this information is not available online, the authors are encouraged to reach out to the asset's creators.
    \end{itemize}

\item {\bf New assets}
    \item[] Question: Are new assets introduced in the paper well documented and is the documentation provided alongside the assets?
    \item[] Answer: \answerYes{} 
    \item[] Justification: We will be making our implementation publicly available along with documentation to support reproducibility.
    \item[] Guidelines:
    \begin{itemize}
        \item The answer NA means that the paper does not release new assets.
        \item Researchers should communicate the details of the dataset/code/model as part of their submissions via structured templates. This includes details about training, license, limitations, etc. 
        \item The paper should discuss whether and how consent was obtained from people whose asset is used.
        \item At submission time, remember to anonymize your assets (if applicable). You can either create an anonymized URL or include an anonymized zip file.
    \end{itemize}

\item {\bf Crowdsourcing and research with human subjects}
    \item[] Question: For crowdsourcing experiments and research with human subjects, does the paper include the full text of instructions given to participants and screenshots, if applicable, as well as details about compensation (if any)? 
    \item[] Answer: \answerNA{} 
    \item[] Justification: No human subjects are involved in this research.
    \item[] Guidelines:
    \begin{itemize}
        \item The answer NA means that the paper does not involve crowdsourcing nor research with human subjects.
        \item Including this information in the supplemental material is fine, but if the main contribution of the paper involves human subjects, then as much detail as possible should be included in the main paper. 
        \item According to the NeurIPS Code of Ethics, workers involved in data collection, curation, or other labor should be paid at least the minimum wage in the country of the data collector. 
    \end{itemize}

\item {\bf Institutional review board (IRB) approvals or equivalent for research with human subjects}
    \item[] Question: Does the paper describe potential risks incurred by study participants, whether such risks were disclosed to the subjects, and whether Institutional Review Board (IRB) approvals (or an equivalent approval/review based on the requirements of your country or institution) were obtained?
    \item[] Answer: \answerNA{} 
    \item[] Justification: No human subjects are involved in this research.
    \item[] Guidelines: 
    \begin{itemize}
        \item The answer NA means that the paper does not involve crowdsourcing nor research with human subjects.
        \item Depending on the country in which research is conducted, IRB approval (or equivalent) may be required for any human subjects research. If you obtained IRB approval, you should clearly state this in the paper. 
        \item We recognize that the procedures for this may vary significantly between institutions and locations, and we expect authors to adhere to the NeurIPS Code of Ethics and the guidelines for their institution. 
        \item For initial submissions, do not include any information that would break anonymity (if applicable), such as the institution conducting the review.
    \end{itemize}

\item {\bf Declaration of LLM usage}
    \item[] Question: Does the paper describe the usage of LLMs if it is an important, original, or non-standard component of the core methods in this research? Note that if the LLM is used only for writing, editing, or formatting purposes and does not impact the core methodology, scientific rigorousness, or originality of the research, declaration is not required.
    \item[] Answer: \answerNA{} 
    \item[] Justification: Our research does not involve LLMs as any important, original, or non-standard components.
    \item[] Guidelines:
    \begin{itemize}
        \item The answer NA means that the core method development in this research does not involve LLMs as any important, original, or non-standard components.
        \item Please refer to our LLM policy (\url{https://neurips.cc/Conferences/2025/LLM}) for what should or should not be described.
    \end{itemize}

\end{enumerate}

\end{document}